\newtheorem{theorem}{Theorem}
\newtheorem{proposition}[theorem]{Proposition}
\begin{document}

\title{Online Antenna Tuning in Heterogeneous Cellular Networks with Deep Reinforcement Learning}

\author{{
    Eren Balevi and
    Jeffrey G. Andrews}\\
\thanks{The authors are with the the Dept. of Electrical and Computer Engineering at the University of Texas at Austin, TX, USA. Email: erenbalevi@utexas.edu, jandrews@ece.utexas.edu. This work has been supported in part by Futurewei (Huawei USA).}				
}
\maketitle \maketitle
\normalsize

\begin{abstract} 
We aim to jointly optimize antenna tilt angle, and vertical and horizontal half-power beamwidths of the macrocells in a heterogeneous cellular network (HetNet). The interactions between the cells, most notably due to their coupled interference render this optimization prohibitively complex. Utilizing a single agent reinforcement learning (RL) algorithm for this optimization becomes quite suboptimum despite its scalability, whereas multi-agent RL algorithms yield better solutions at the expense of scalability. Hence, we propose a compromise algorithm between these two. Specifically, a multi-agent mean field RL algorithm is first utilized in the offline phase so as to transfer information as features for the second (online) phase single agent RL algorithm, which employs a deep neural network to learn users locations. This two-step approach is a practical solution for real deployments, which should automatically adapt to environmental changes in the network. Our results illustrate that the proposed algorithm approaches the performance of the multi-agent RL, which requires millions of trials, with hundreds of online trials, assuming relatively low environmental dynamics, and performs much better than a single agent RL. Furthermore, the proposed algorithm is compact and implementable, and empirically appears to provide a performance guarantee regardless of the amount of environmental dynamics.
\end{abstract}

\begin{IEEEkeywords}
Deep reinforcement learning, online antenna tuning, Q-learning, HetNets, 5G. 
\end{IEEEkeywords}

\section{Introduction}
Cellular networks will rapidly densify for 5G and beyond, largely through the opportunistic addition of small cells over time \cite{Andrews5G}.  A significant performance advantage of cellular networks (as opposed to e.g. WiFi) is the use of directional antennas at the base station, which concentrates transmit (and receive) power both horizontally and vertically, as well as having an appropriate downtilt.  Properly setting these three antenna parameters has important ramifications not only on the desired received power by users (UEs) but also on the interference due to neighboring cells.  Unfortunately, a centralized optimum solution of these is NP-hard, as well as impractical since the UE locations are unknown\footnote{UEs do not explicitly return their locations in radio measurements in LTE.}. Traditionally these settings have been implemented manually and/or by trial and error, but clearly this is far from optimal as well as not scalable for dense networks, particularly when new base stations can be added at any time or other aspects of the environment can change.  

The goal of this paper is to develop a scalable and distributed near-optimal method for setting these antenna parameters.  In particular, we wish to dynamically maximize the (arbitrarily weighted) sum-rate of the users in the network by having each base station autonomously set their beamwidths and tilt based on ongoing feedback from the users in their own cell.  Since the optimum antenna settings change relatively slowly -- the cellular topology and propagation conditions change slowly -- deep reinforcement learning is a promising tool which has had significant recent success in challenging problems with analogous characteristics \cite{Mnih}-\cite{Lillicrap}.  

\subsection{Related Work and Motivation}
Tuning the antenna parameters of macrocells has been extensively studied in the literature under the mantle of self organizing networks (SONs) \cite{survey13}. Many works in this line of research have focused on the antenna tilt angle utilizing methods from conventional optimization theory \cite{Boyd} in an attempt to optimize the capacity and coverage  \cite{Buenestado}-\cite{Awada}. These optimizations tend to be restricted to some special handcrafted rules and heuristics, and do not learn the dynamics of the environment. This leads to the loss of adaptability especially for a time-varying environment. 

Reinforcement learning (RL) on the other hand can learn and adapt to the dynamics of the environment. One of the first attempts to make use of RL for the antenna tilt optimization was for single-tier cellular networks \cite{Razavi}. The major limitation of this study is that it requires a single macrocell at a time to optimize its parameters in a static environment. This assumption was relaxed in \cite{Islam} by allowing macrocells to simultaneously optimize their parameters. Both \cite{Razavi} and \cite{Islam} addressed \textit{the-curse-of-dimensionality} problem in RL via a combination of fuzzy logic and RL \cite{Bonarini}. This fuzzy RL framework was also used for the optimization of the antenna tilt and transmission power \cite{Cigdem}. However, fuzzy RL activates more than one membership function while discretizing the continuous state and action variables,  which complicates parsing the reward signal. Sparse sampling is another technique that can handle the-curse-of-dimensionality problem \cite{Kearns}. This method was utilized for the self-optimization of the coverage through the antenna tilt optimization \cite{Thampi}. The recent advances in deep RL reveal that neither fuzzy RL nor sparse sampling is as efficient as deep learning (DL) based models.

One paper \cite{Guo} that considers a HetNet, as opposed to a single-tier of base stations, attempts to increase the user fairness via a dynamic antenna tilt optimization. However, their model does not address the scalability problem, and hence is limited to a simple environment with very few cells.  Motivated by the recent success of deep RL algorithms \cite{Mnih}-\cite{Lillicrap}, our paper aims to develop an online deep RL-based antenna tuning algorithm for a more complex and realistic network topology, e.g. for highly dynamic HetNets and with all three key antenna parameters considered simultaneously.

\subsection{Contributions} \label{Contributions}
The main contribution of this paper is a novel and practical deep RL algorithm for optimizing the antenna parameters of macrocells in HetNets in an attempt to maximize the weighted sum-rate of users. For this purpose, we first design a RL-based framework to transform the weighted sum-rate optimization problem into a Markov Decision Process (MDP) by defining states, actions and rewards according to the available control signals in macrocells. The proposed framework is used for jointly tuning the antenna parameters of macrocells, which are the antenna tilt angle, and vertical and horizontal half-power beamwidths, so as to maximize the weighted sum-rate. However, solving this MDP for multi-cell environments, in which each macrocell acts as an agent, yields an exponentially growing action space \cite{NashQ}. On the other hand, considering all neighboring cells as a part of the environment to formulate the problem as a single agent RL becomes highly suboptimum despite its scalability \cite{Tan}. 

We propose a two-step novel practical deep RL algorithm as a compromise solution between single agent and multi-agent RL for optimizing the antenna parameters of macrocells in HetNets. In the first offline step, a multi-agent mean field RL algorithm, which treats all neighboring agents as a single virtual agent  \cite{ML-MARL}, is leveraged to make the RL problem scalable and tractable. Specifically, mean field RL enables us to consider the inter-cell interference -- which is the main impediment in weighted sum-rate optimization problems -- as the cumulative policy of neighboring cells. In the second online step, the learned behaviors of neighboring cells (or the average aggregate inter-cell interference due to having different antenna settings for a time period) and the learned locations of UEs with a deep neural network (DNN) by exploiting the correlations among signal-to-interference-plus-noise ratios (SINRs) are utilized as features for the single agent feature based Q-learning that uses linear function approximation. This selection of features decreases the required online trails from millions to hundreds, because it enables us to estimate the immediate impact of any possible antenna setting without actually taking the time to test it, which would require setting it and then processing feedback from the users. The other key insights for the proposed deep RL algorithm are as follows:
\begin{itemize}
\item{Despite having just hundreds online trials and with only having local observations, the proposed algorithm can approach the performance of the optimum solution, which can be achieved by solving the multi-agent mean field Q-learning algorithm online that requires millions of trials with a global network knowledge, for relatively low environmental dynamics. Additionally, our two-step algorithm is much better than a classical single agent RL algorithm, which is totally nonadaptive to the neighboring cells.} 
\item{Even if the relative variance of the environment dynamics is high, the proposed algorithm appears to guarantee a certain performance gain. Specifically in our simulations, regardless of how high the relative variance is, we empirically observe about $0.6\Delta$ dB SINR gain when the optimum solution provides $\Delta$ dB SINR gain.}
\item{The proposed algorithm is quite compact in that $80\%$ of the states in the state space are mapped to a small subset of the action space. Because many states map to the same actions (antenna settings), the antenna arrays will not have to be changed very often.}
\end{itemize}

The paper is organized as follows. The system model and problem statement are given in Section \ref{Problem Statement}. The proposed RL framework for weighted sum-rate maximization is introduced in Section \ref{Problem Formulation}, which is employed in developing the deep RL algorithm for dynamically tuning the antenna parameters of macrocells in HetNets in Section \ref{DeepNodeB Model}. The simulation results are given in Section \ref{Simulations} and the paper ends with concluding remarks and suggested future work in Section \ref{Conclusions}.  

\section{System Model and Problem Statement}\label{Problem Statement}
We consider an urban HetNet deployment, which is composed of many macrocells and small cells. Macrocells are one of the three sectors of an eNodeB (eNB, i.e. a base station). Small cells refer to pico or femtocells \cite{SevenWays}. All the parameters of cells are initially set to their default values, e.g., see \cite{Macro-Pico} for the 3GPP Release-13. According to these settings, the uniformly distributed UEs are connected to one of the cells. Specifically, each UE is assigned to the cell from which it receives the strongest signal. Under this scenario, i.e., after cell selection each macrocell dynamically tunes its antenna-related parameters uniquely according to their UEs and environment. In particular, the antenna tilt angle $\theta_t$, and vertical and horizontal half-power beamwidths, denoted by  $\theta_{3dB}$ and $\phi_{3dB}$, of macrocells are jointly tuned, e.g., $\psi_k=(\theta_{t,k}, \theta_{3dB,k}, \phi_{3dB,k})$ for the $k^{th}$ macrocell. Optimizing these parameters considering the environment (or interference) can greatly enhance SINRs. To illustrate, the antenna gain for a particular horizontal and vertical angle is \cite{Rel10} 
\begin{equation} \label{antenna_gain}
A(\phi, \theta) = -\min \left[ - (A^{(h)}(\phi) + A^{(v)}(\theta)), 25 \right] \text{dB}
\end{equation}
where
\begin{equation} 
A^{(h)}(\phi) = -\min \left[12\left(\frac{\phi}{\phi_{3dB}}\right)^2, 25 \right] \text{dB}
\end{equation}
and
\begin{equation} 
A^{(v)}(\theta) = -\min \left[12\left(\frac{\theta-\theta_{t}}{\theta_{3dB}}\right)^2, 20 \right] \text{dB}.
\end{equation}
In words, an inappropriate antenna setting can lead to a $20$ or $25$ dB power loss for a UE. 

There can be a large gain by dynamically configuring the antenna parameters of each macrocell uniquely according to its UEs and environment. However, it is not easy to find the optimum antenna configuration for a macrocell because of the conflicts among UEs and coupling amongst cell nearby base station settings. For example, one antenna configuration at one base station can increase the data rate of one UE while simultaneously decreasing the data rate of another UE in its own or neighboring cell. Hence, configuring the antenna parameters is formulated as an optimization problem so as to maximize the weighted sum-rate of users while guaranteeing $1-\epsilon_{cov}$ coverage. In this manner, each macrocell optimizes the antenna parameters according to the typical low mobility UEs that use that macrocell most of the time. Note that it is not reasonable to tune the antenna parameters based on the highly mobile UEs, which anyway will only be present in the cell for a short time.

A key point when it comes to optimizing antenna parameters is that the acquired solution has to be utilized for a fairly long period of time, because it is not practical to change the antenna settings very frequently. Thus, the optimization problem is expressed for $K$ macrocells, each of which has $U$ number of typical UEs with $V$ spatial streams as

\begin{equation} \label{optimizationProblemFormNW}
\begin{aligned}
& \underset{ \psi=(\psi_1, \cdots, \psi_K)}{\text{maximize}}
& & \mathbb{E}\left[\sum_{n=1}^N\sum_{k=1}^K\sum_{u=1}^U\sum_{v=1}^{V}\lambda_{k,u,v}[n]\log_2(1+\rho_{k,u,v}[n])\right]  \\ 
& \text{subject to} & &  \mathbb{P}[\rho_{k,u,v}[n]<\rho_{min}] <\epsilon_{cov}, \forall n,k,u,v \\
& & & \theta_{t,k}^{\min}\leq \theta_{t,k} \leq \theta_{t,k}^{\max}, \forall k \\
& & & \theta_{3dB,k}^{\min}\leq \theta_{3dB,k} \leq \theta_{3dB,k}^{\max}, \forall k\\
& & & \phi_{3dB,k}^{\min}\leq \phi_{3dB,k} \leq \phi_{3dB,k}^{\max}, \forall k\\
\end{aligned}
\end{equation}
The first constraint is to ensure $1-\epsilon_{cov}$ coverage in which $\rho_{min}$ is the minimum necessary SINR for a successful transmission, and all the other constraints satisfy the minimum and maximum allowed values of the optimization variables.  The $\lambda_{k,u,v}[n]$'s are the nonnegative input parameters and can be determined to prioritize users or control data rate increase in cell-interior or cell-edge. Note that the SINR is
\begin{equation} \label{SINR_exp}
\rho_{k,u,v}[n]=\frac{P_{k,u,v}[n]A_{k,u,v}[n]G_{k,u,v}[n]}{\sum_{j \neq k}P_{j,u,v}[n]A_{j,u,v}[n]G_{j,u,v}[n]+\sigma_n^2}
\end{equation}
where $P_{k,u,v}[n]$, $A_{k,u,v}[n]$\footnote{The $\phi, \theta$ terms in antenna gain are omitted for brevity.} and $G_{k,u,v}[n]$ are the transmit power, antenna gain and channel gain respectively. The noise power is denoted as $\sigma_n^2$. Throughout the paper, it is assumed that macrocells transmit at full power with rate adaptation being used to track channel variations, as is typical in downlink cellular systems.

The primary challenges regarding the optimization problem \eqref{optimizationProblemFormNW} in addition to it being non-causal due to requiring a decision at current time for the next $N$ time intervals are as follows. This is a non-convex optimization problem and the network-wide global optimum solution can be found by a centralized station with many control signaling only through an exhaustive search or the branch-and-bound algorithm \cite{Book-RA}. Clearly, this is not a scalable approach. More importantly, this optimization is NP-hard due to inter-cell interference \cite{Luo-Zhang}. Given these challenges, this paper proposes to approach the solution of  (\ref{optimizationProblemFormNW}) in a distributed way, which is given by

\begin{equation} \label{optimizationProblemForm}
\begin{aligned}
& \underset{\psi=(\theta_{t}, \theta_{3dB}, \phi_{3dB})}{\text{maximize}}
& &  \mathbb{E}\left[\sum_{n=1}^N\sum_{u=1}^U\sum_{v=1}^{V}\lambda_{u,v}[n]\log_2(1+\rho_{u,v}[n])\right]  \\ 
& \text{subject to} & &  \mathbb{P}[\rho_{u,v}[n]<\rho_{min}] <\epsilon_{cov}, \forall n,u,v \\
& & & \theta_{t}^{\min}\leq \theta_{t} \leq \theta_{t}^{\max} \\
& & & \theta_{3dB}^{\min}\leq \theta_{3dB} \leq \theta_{3dB}^{\max}\\
& & & \phi_{3dB}^{\min}\leq \phi_{3dB} \leq \phi_{3dB}^{\max}\\
\end{aligned}
\end{equation}
by having each macrocell individually using an RL-based algorithm. This also addresses the causality problem. Specifically, a novel practical deep RL algorithm is developed where each macrocell makes autonomous decisions. Although the performance gap between this distributed solution and the network-wide globally optimum solution is very difficult to quantify exactly, it is valuable to find distributed solutions for this complex weighted sum-rate optimization problem as stated in \cite{Book-RA} and references therein. 
 
\section{A Reinforcement Learning Framework for Weighted Sum-Rate Maximization}\label{Problem Formulation}
\begin{figure*}[!t]
\centering
\subfigure[]{
\label{fig:RLmodel}
\includegraphics[width=4.25in]{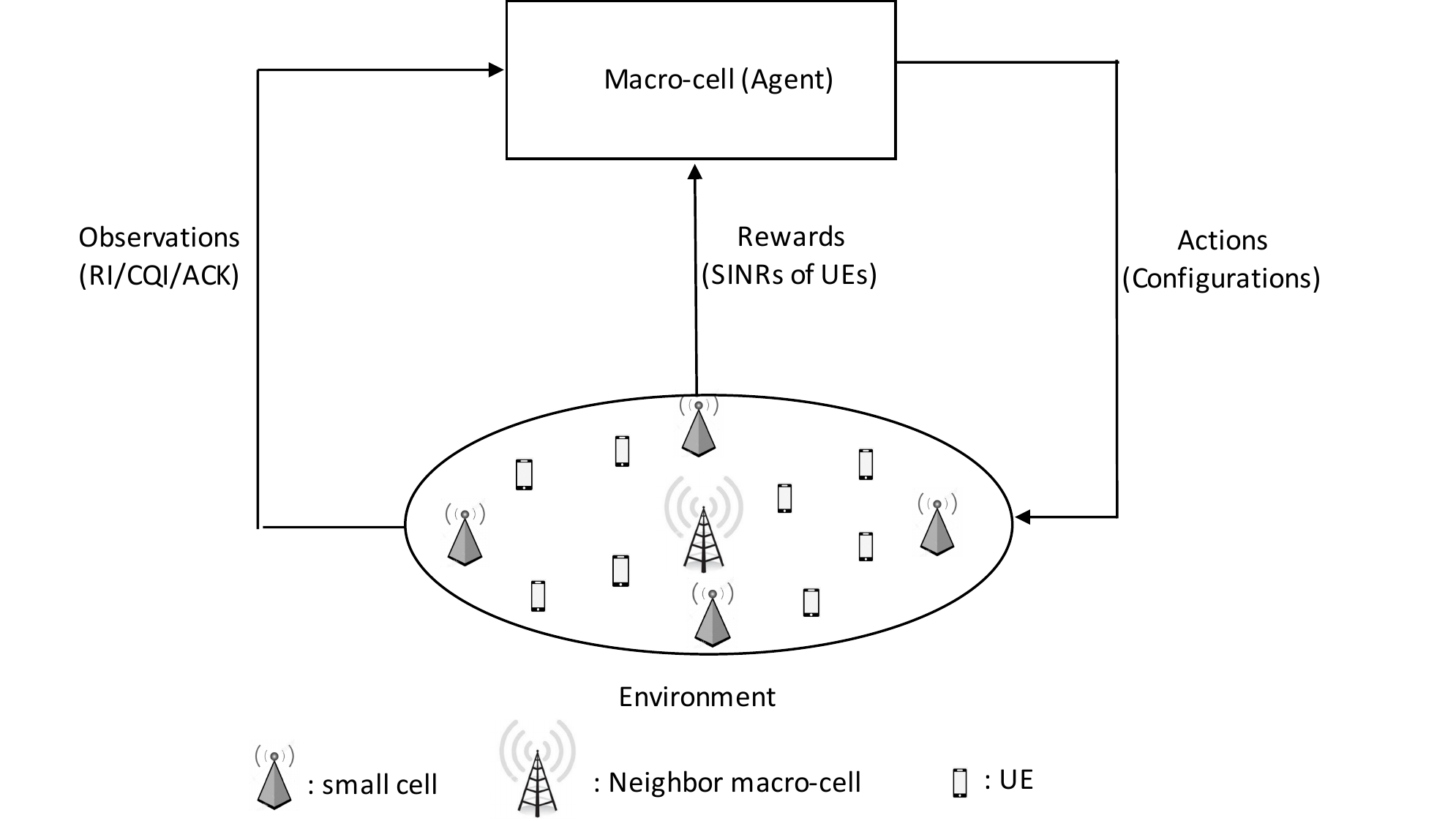}}
\qquad
\subfigure[]{
\label{fig:seqOrder}
\includegraphics[width=2.25in]{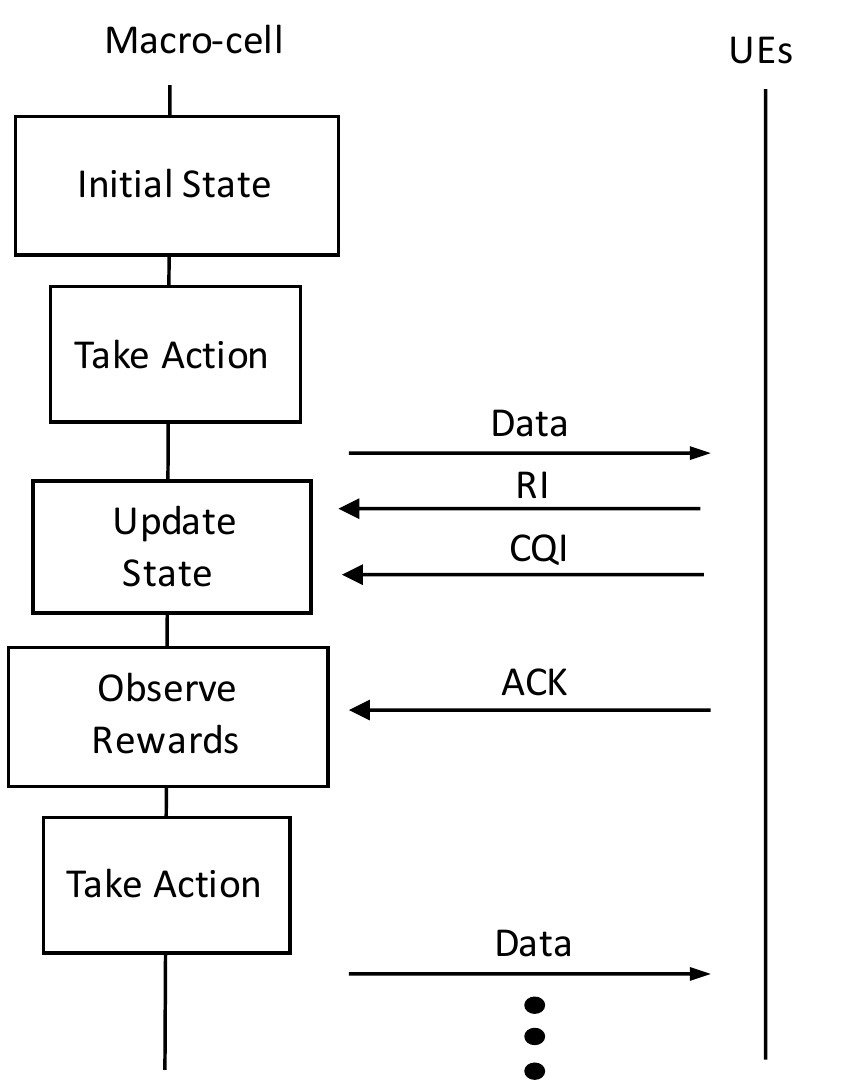}}
\caption{The proposed reinforcement learning framework: (a) actions refer to the possible antenna configurations, and rewards are taken in case of successful transmissions, (b) the sequential order of updating states, taking actions and observing rewards.}
\end{figure*}
In conventional RL problems, agents learn the environment through observations before choosing an action, and rewards are received in response to actions \cite{Barto}. To adapt this RL framework to our distributed optimization problem, macrocells are treated as agents, and the HetNet (from a single macrocell point of view) is the environment as depicted in Fig. \ref{fig:RLmodel}. In our problem formulation, we utilize standard UE feedback such as rank indicator (RI), channel quality indicator (CQI), and acknowledgments (ACKs), where RI indicates the number of transmission layers that UE can distinguish, CQI discloses the quality of the channel, and ACKs/NACKs notify which transmission succeeded (if SINR is greater than $\rho_{min}$, UE sends the ACK signal to the macrocell) or failed. 

The aforementioned RL framework is employed to solve the weighted sum-rate optimization problem by first converting it to an MDP. However, this is nontrivial and its efficiency depends on the selection of states, actions and rewards. For this purpose, we uniquely define states, actions and craft a reward mechanism. An action is taken for each state that leads to receiving a reward, whose cycle is depicted in Fig. \ref{fig:seqOrder}. This cycle is repeated throughout the training period. Once the optimum action is found according to the expected long term rewards, it remains fixed for a long period of time and would only change if significant sustained changes in the environment were observed, e.g., if a new base station was added nearby. 

\subsection{State Space}
The environment can be represented as a state space such that the current state gives the latest status of the environment. Since interference is seen as part of the environment, states are defined as the SINRs of UEs. More specifically, states are defined in terms of quantized SINRs (in dB scale) to reduce the dimensionality of the problem. Macrocells obtain quantized SINRs from CQIs. Specifically, the $v^{th}$ spatial stream of the $u^{th}$ UE has a quantized SINR $\gamma_{u,v}^{(q)}[n]$ for a macrocell such that $\gamma_{u,v}[n] = 10\log_{10}(\rho_{u,v}[n])$\footnote{The quantized and unquantized SINRs are denoted as $\gamma_{u,v}^{(q)}[n],  \gamma_{u,v}[n]$, respectively in dB scale.}. Since CQI observations are at transmission time interval (TTI) level and the antenna setting may not practically be done at that rate, multiple CQIs are received for each UE in one period, in which the period is defined as the inverse of the antenna setting rate. Hence, the state is defined by averaging the quantized SINR values\footnote{Notice that fast fading disappears due to this averaging.}. The concatenation of all quantized and averaged SINRs defines the state of the environment in macrocells. In accordance with this definition, the state at the $n^{th}$ period becomes
\begin{equation} \label{stateDef}
s=[\gamma_{1,1}^{(q)}[n] \cdots \gamma_{U,V}^{(q)}[n]]
\end{equation}
where $s\in \mathbb{S}$, and $\mathbb{S}$ is the set of all possible combinations. For a quantized SINR with $M$ bins there are  
\begin{equation} \label{stateNumber}
|\mathbb{S}|=M^{N_{str}}
\end{equation}
possible environment states, in which $N_{str}$ is the total number of streams.

\subsection{Action Space}
Each action corresponds to one possible antenna configuration of a macrocell in terms of the antenna tilt angle $\theta_t$, and vertical and horizontal half-power beamwidths denoted as $\theta_{3dB}$ and $\phi_{3dB}$. Hence, one action $a$ is described as 
\begin{equation} \label{actionDef}
a=(\theta_t, \theta_{3dB}, \phi_{3dB})
\end{equation}
where $a \in \mathbb{A}$. The cardinality of $\mathbb{A}$ or the possible number of actions can be expressed as
\begin{equation} \label{actionNumber}
|\mathbb{A}| = |\theta_{t}||\theta_{3dB}| |\phi_{3dB}|.
\end{equation}
All set of actions constitute the action space $\mathbb{A}$, which can be treated as having a probability distribution function over actions, i.e., associating each action to a probability, which is also called a policy. Initially, all actions have some probabilities over a state and these probabilities are updated with rewards in the training period. 

\subsection{Rewards}
A reward mechanism has to be crafted in addition to defining problem-specific states and actions. For this purpose, the immediate reward taken one step ahead of the selected action is designed. An immediate reward is received when ACKs come from the typical UEs; otherwise a large penalty is applied. This is directly related with the objective function and the first constraint of (\ref{optimizationProblemForm}). More precisely, suppose that the quantized SINR of the $v^{th}$ stream of the $u^{th}$ UE is $\gamma_{u,v}^{(q)}[n]$ for a macrocell. Then, the immediate reward becomes
\begin{equation} \label{rewardImm}
r[n] = \sum_{u=1}^U \sum_{v=1}^Vc_{u,v}[n]f_r(\gamma_{u,v}^{(q)}[n])
\end{equation}
where
\begin{align} \label{rewardGen}
    f_r(\gamma_{u,v}^{(q)}[n]) = 
    \begin{cases}
       10(\log_{10}2)\log_2(1+10^{\frac{\gamma_{u,v}^{(q)}[n]}{10}}) &  \text{if ACK}\\
       \xi & \text{o.w.}
    \end{cases} 
\end{align}
If $\gamma_{u,v}[n]$, which is determined by the chosen action and environment, is greater than $\gamma_{min}$, which is equal to $10\log_{10}(\rho_{min})$, UE sends the ACK signal to the macrocell; otherwise a large penalty is received, which is represented by $\xi$. Then, the expected long-term rewards can be trivially stated in terms of the immediate rewards in (\ref{rewardImm}) for a given state and action as 
\begin{equation}  \label{rewardr}
R=\mathbb{E}[ r[n]+\alpha r[n+1] + \alpha^2 r[n+2] + \cdots | \mathbb{S}=s, \mathbb{A}=a]
\end{equation}
where $\alpha$ is the discount factor that determines the importance of future rewards, which can be found via Q-learning as detailed next.

\subsection{Problem Formulation}
The defined states and actions in (\ref{stateDef}), (\ref{actionDef}) as well as the crafted reward mechanism in (\ref{rewardImm}) leads to the canonical RL problem of\footnote{Since a single-agent RL algorithm is used in the online phase, the problem is formulated according to single agent RL.} 
\begin{equation} \label{optimizationProblemRew}
\begin{aligned}
& \underset{ a }{\text{maximize}}
& & R \\
& \text{subject to} 
& & & a\in \mathbb{A}. \\
\end{aligned}
\end{equation}
The optimization problem of (\ref{optimizationProblemForm})  and (\ref{optimizationProblemRew}) give the same solution for quantized SINRs when $c_{u,v}[n]=\lambda_{u,v}[n]/(\alpha^n10\log_{10}2)$, which can be readily shown as follows. The objective function and the first constraint of (\ref{optimizationProblemForm}) are ensured by (\ref{rewardImm}) in (\ref{optimizationProblemRew}). Since $a$ corresponds to $\psi$, the other constraints of (\ref{optimizationProblemForm}) are held by the first constraint of (\ref{optimizationProblemRew}). 

Solving an RL problem refers to finding the optimum action for a given state through rewards, which can be seen as finding the optimum policy. If the state transitions and rewards were given as $\mathit{a\ priori}$ information, the optimum policy would be readily found with dynamic programming by the agents. However, it is not possible to know these dynamics in our problem. Thus, we use online temporal-difference (TD) learning in which neither state transitions nor rewards are known as a priori information, and the agent acts in the environment and uses its experience to find the optimum policy. Specifically, Q-learning, which is one of the most appropriate methods for TD learning, is employed to solve the problem in (\ref{optimizationProblemRew}). 

\section{A Deep Reinforcement Learning Algorithm For Antenna Tuning} \label{DeepNodeB Model}
In a distributed optimization, the decision given by a macrocell, which optimizes the antenna parameters from its local observations, triggers the future interference values of its UEs because of the interactions among neighboring cells. To capture these interactions, a multi-agent RL modeling is necessary. However, multi-agent RL leads to exponentially increasing dimensions, and infeasible or intractable learning due to the increased number of agents as well as instability \cite{Schutter}. This limits the relevance of multi-agent RL for HetNets, which would have a very large number of agents. On the other hand, considering all the neighboring macrocells as a part of the environment to use a single-agent RL algorithm leads to a highly suboptimum solution \cite{Tan}. 

We propose a novel deep RL algorithm that is a compromise between multi-agent and single agent RL based on the recently proposed mean field RL algorithm that makes multi-agent RL scalable \cite{ML-MARL}. In accordance with that, our algorithm is composed of two-steps. In the first step, the multi-agent mean field RL algorithm is used to learn the cumulative behavior of the neighboring macrocells for each state-action pair of the target macrocell. This step is offline, i.e., a realistic simulation environment is constituted for the macrocells whose own locations as well as the locations of their typical UEs are known. Since the locations of small cells and their behaviors in response to the actions of macrocells are unknown, the first step itself is not sufficient to learn the  stochastic environment. Furthermore, there can be some mismatches between the simulation and real environment. These factors explain why we need the online step. In the second or online step, a single-agent RL or in particular a feature-based Q-learning is utilized by defining the learned cumulative behavior of the neighboring macrocells in the first step as features. The underlying reason of having a single agent RL algorithm for the online phase is associated with complexity. That is, if the multi-agent mean field RL was used in the online phase, this would require millions of trials and the global network knowledge. On the other hand, hundreds of online trials are sufficient for our proposed algorithm and the global network knowledge is not needed, i.e., each macrocell takes online actions according to its local observations at the expense of some performance loss. These offline and online steps are elaborated next. In what follows, the pseudo-code of the proposed algorithm is presented, and its convergence and complexity analyses are given.

\subsection{Offline Multi-Agent Mean Field Reinforcement Learning}
Solving a multi-agent RL problem with tens or hundreds of agents is not scalable, in particular it is impractical even in the simulation environment. Mean field RL was proposed as a remedy to cope with the curse of dimensionality in multi-agent RL problems by approximating the average effect of all neighboring agents as a single virtual agent \cite{ML-MARL}. This reduces the RL problem to two agents: the target agent and the virtual agent. Here, each agent's behaviors depend on the dynamics of the population, and the dynamics of the population change by the actions of each agent. The interplay between these two entities reinforces each other and comes to a Nash equilibrium. Interestingly, this mean field RL approach is inherently applicable to the optimization problems that aim to maximize SINRs, because it is sufficient to adapt to the cumulative effect (i.e. the cumulative interference) of the neighboring cells to make the optimum decision for the target cell instead of adapting their individual responses.  

The interference term in (\ref{SINR_exp}) is a random variable
\begin{equation} \label{interference}
I_{j,u,v}[n] = \sum_{j \neq k}P_{j,u,v}[n]A_{j,u,v}[n]G_{j,u,v}[n],
\end{equation}
whose distribution depends on the parameters of the macrocells and small cells, and the channel, which can also be expressed as 
\begin{equation} \label{SCinterference}
I_{j,u,v}[n] = \sum_{j\neq k, j\in M_c}P_{j,u,v}[n]A_{j,u,v}[n]G_{j,u,v}[n] + \sum_{j\in S_c}P_{j,u,v}[n]A_{j,u,v}[n]G_{j,u,v}[n],
\end{equation}
where $M_c$ is the set of macrocells and $S_c$ is the set of small cells. Associating $A_{j,u,v}[n]$ with its corresponding action $a_j$, the mean action of the $N_j$ number of neighboring macrocells to the $k^{th}$ macrocell becomes
\begin{equation}\label{MCinterference}
\bar{a}_k = \frac{1}{N_j}\sum_{j=1}^{N_j}a_j
\end{equation}
and 
\begin{equation}\label{targetMLaction}
a= {\rm argmax\ } \pi_k(a_k | s,\bar{a}_k ) 
\end{equation}
for the $k^{th}$ agent policy $\pi_k$. Iterating \eqref{MCinterference} and \eqref{targetMLaction} via the mean field RL algorithm in \cite{ML-MARL} converges to a solution. This enables us to define
\begin{equation} \label{betadef}
\beta(s,a) \coloneqq \mathbb{E}\left[\sum_{j\neq k, j\in M_c}P_{j,u,v}[n]A_{j,u,v}[n]G_{j,u,v}[n]\right],
\end{equation}
which is used while defining features in the online phase as an approximation to the interference. The mismatch between the true and approximated interference is quantified in evaluating the performance of the algorithm by defining a metric, which is the relative variance of the environment dynamics, given by
\begin{equation} \label{env_dyn}
\eta = \frac{{\rm var}(\sum_{j\in S_c} P_{j,u,v}[n]A_{j,u,v}[n]G_{j,u,v}[n])}{\mathbb{E}_{s,a}[\beta(s,a)]}.
\end{equation}

\subsection{Online Single Agent Feature-Based Q-learning}
A single agent feature-based Q-learning algorithm is devised in this phase, in which the features are obtained with the help of the mean field RL and DL. To be self-contained, the basics of Q-learning is briefly summarized in the Appendix. A key point for Q-learning in wireless networks is the value of $\epsilon$ that is used to pick an action for the $\epsilon$-greedy policy. In particular, RL problems in wireless networks are much more sensitive to the value of $\epsilon$ than in other fields due to wasting scarce resources such as power and bandwidth, which are consumed in proportion to the number of trials. Due to that, we empirically develop a scheduling algorithm for $\epsilon$-greedy policy, given by 
\begin{equation}  \label{epsilon-scheduling}  
\epsilon = 1/k
\end{equation}
where $k$ is initially taken as $1$ and increases with period $T_\epsilon$, that is $k \rightarrow k+1, \text{if}\ n\ \text{mod}\ T_\epsilon = 0$.

Our feature-based Q-learning algorithm relies on linear function approximation. This is because a linear function approximation for Q-learning has a convergence guarantee, whereas nonlinear function approximators such as DNNs do not have any convergence guarantee \cite{Barto}. In linear function approximation, the q-values in (\ref{qFunc}) are approximated as the linear combination of features given by
\begin{equation} \label{refVec}
\hat{q}(s,a,\textbf{w}) = \textbf{x}^T\textbf{w}
\end{equation}
where $\textbf{x}$ is the feature vector, and $\textbf{w}$ can be found by defining a cost function that minimizes the mean square error between the actual and approximated q-values as
\begin{equation} \label{costFnc}
J(\textbf{w}) = \mathbb{E}[ ( q_{\pi_n}(s,a) - \hat{q}(s,a,\textbf{w}) )^2 ]
\end{equation}
where
\begin{equation} \label{TD}
q_{\pi_n}(s,a) = r[n] + \alpha \hat{q}(s',a',\textbf{w})
\end{equation}
due to TD learning. Since (\ref{costFnc}) is a quadratic function with respect to $\textbf{w}$, and stochastic gradient descent removes the expected value, $\textbf{w}$ is iteratively updated as
\begin{equation} \label{updatew2}
\textbf{w}\leftarrow \textbf{w} + \mu ( r[n] + \alpha \hat{q}(s',a',\textbf{w}) - \textbf{x}^T\textbf{w} ) \textbf{x}.
\end{equation}

The key point in feature-based Q-learning is to craft features, i.e., design $\textbf{x}= [x_{1,1} \cdots x_{U,V}]$. Although it is well-known that each feature should give some information about the impact of the selected action, the overall feature selection process is a nontrivial task. We define the features to be able to estimate the immediate rewards in (\ref{rewardImm}) without actually taking the action. The main benefit of this is to decrease the number of online trails. Accordingly, the features are defined in terms of SINRs, that is,
\begin{equation} \label{refVec2}
x_{u,v} = \gamma_{u,v}^{(q)}[0] + \Delta A_{u,v} - \Delta\beta(s,a),
\end{equation}
where the first term on the right hand side of (\ref{refVec2}) denotes the initial quantized SINR, the second term is the difference between the antenna gain due to the selected antenna setting at the current time $n$ and the initial setting, which is
\begin{equation} \label{DeltaAntenna}
\Delta A_{u,v} = A_{n}(\phi_{u,v}, \theta_{u,v}) - A_{0}(\phi_{u,v}, \theta_{u,v}),
\end{equation}
and the last term is the change in the interference due to taking the current action for the current state, given by
\begin{equation} 
\Delta\beta(s,a) = \beta(s,a)-\beta_{0}
\end{equation}
where $ \beta(s, a)$ is defined in \eqref{betadef} and enables us to become adaptive to the neighboring macrocells even if a single agent RL algorithm is used, and $\beta_{0}$ is the initial interference.

It is apparent that this feature selection requires us to know the locations of UEs due to $\phi_{u,v}$ and $\theta_{u,v}$ in \eqref{DeltaAntenna}, which are not known by macrocells. To leverage these features and propose an algorithm accordingly, a supervised DNN is designed so as to learn the locations of UEs at macrocells by exploiting the correlations among their SINRs. For this problem we do not have to learn the UE locations very accurately, because rewards are based on quantized SINRs. Hence, the locations are learned in a cluster basis, meaning that the coverage area of a macrocell is divided into clusters, and then which UEs are in which clusters is learned. Our approach is different than the prior works that find the UEs locations via measuring the multipath characteristics of the received signal \cite{Wax}, \cite{Kupershtein} or the received signal strength \cite{Bahl}, \cite{Laitinen} as the location fingerprint. Specifically, we exploit a pattern among clusters in terms of SINR to find the UE locations as opposed to using a database.

Clusters are formed in polar coordinates directly related to the aim of optimizing the antenna parameters. These clusters for a macrocell are illustrated in Fig. \ref{fig:PolarAE} when there are $N$ clusters, e.g., it is $20$ for this illustration. Every cluster has a value $i_k$ for $k=1, 2, \cdots, N$, which shows the average SINRs of UEs in that cluster assuming that there can be more than one UEs in one cluster. It is highly unlikely that any two clusters have the same average value because of the HetNet (asymmetric) topology. The cluster values vary according to a pattern associated with the environment. The goal is to learn these cluster values by exploiting this pattern using a supervised DNN. Once cluster values are learned, UEs can be mapped to the clusters, in particular to the one with the closest SINR.  

\begin{figure}[!t]
\centering
\subfigure[]{
\label{fig:PolarAE}
\includegraphics[width=3.60in]{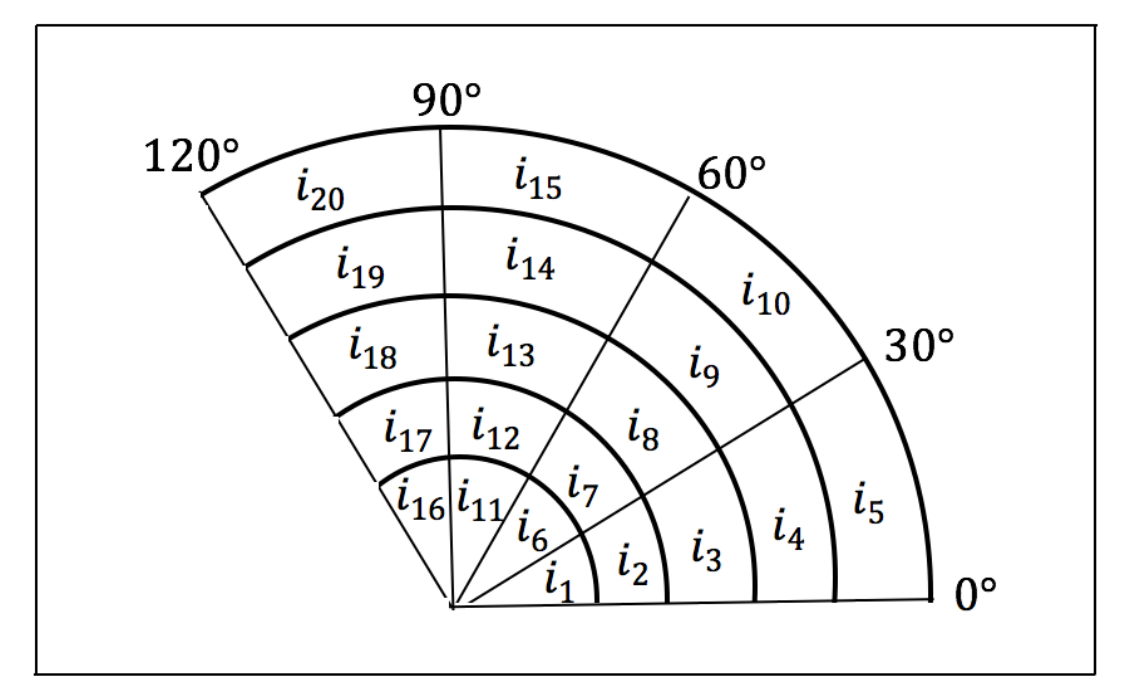}}
\qquad
\subfigure[]{
\label{fig:AE_arch}
\includegraphics[width=2.95in]{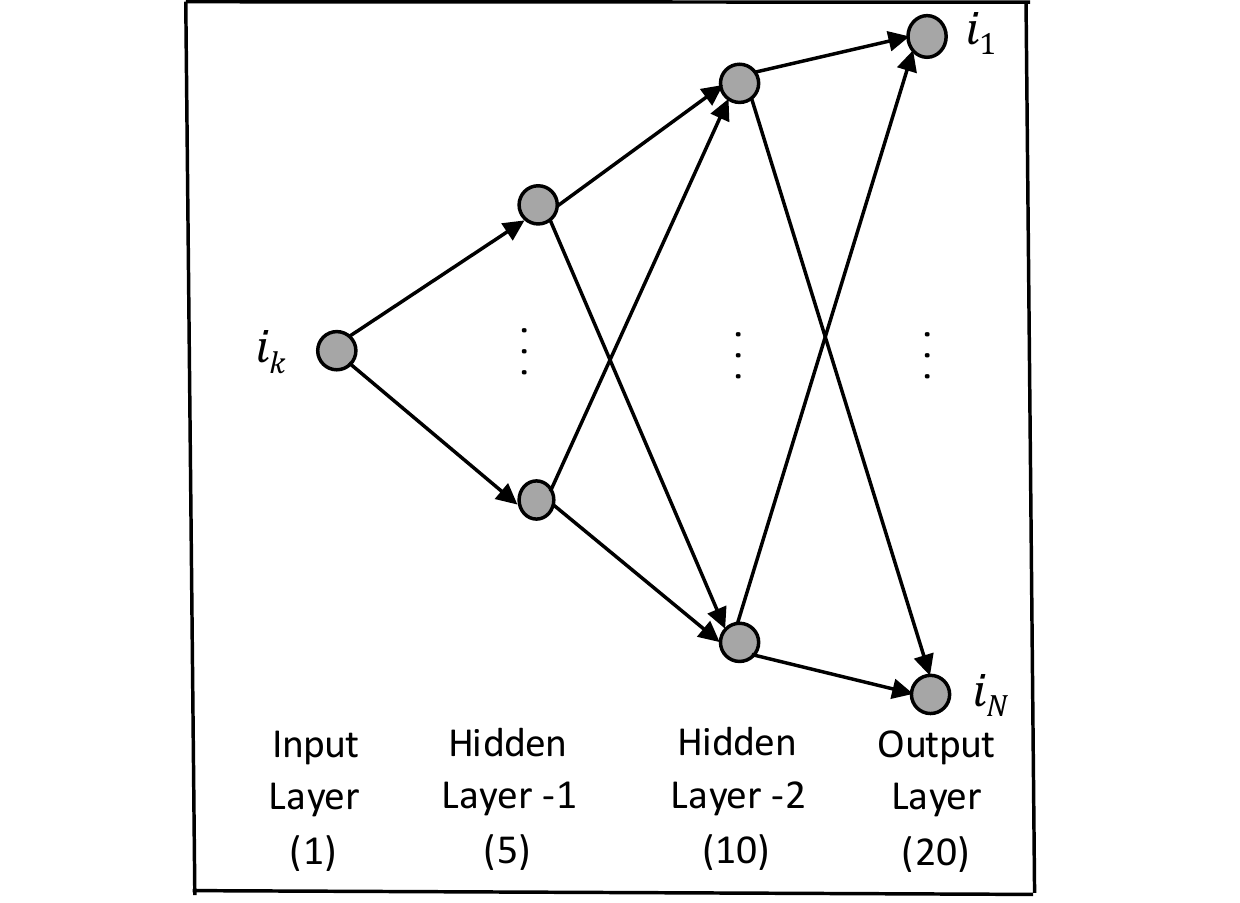}}
\caption{Learning the UE locations in cluster-basis, which are formed in polar coordinates, via training a DNN that can exploit the correlations among SINRs determined by the environment: (a) The clusters in polar coordinates for one sector of a macrocell. (b) Fully connected DNN architecture to learn all cluster values at the output from a single (input) cluster value, in which $N=20$.}
\end{figure}

A supervised fully connected DNN model is designed that is composed of an input layer, two hidden layers and an output layer as shown in Fig. \ref{fig:AE_arch}. The input layer takes a single cluster value and produces all cluster values via the hidden layers. To train this DNN model, all cluster values are measured many times offline, which yields many cluster vectors. In what follows, one specific entry -- which can be determined arbitrarily -- of all cluster vectors is picked, and used as the input data. More precisely, each input data sample and the corresponding cluster vector constitute one training sample, in which the latter is the labeled data. The proposed DNN model is trained according to this training data set using gradient descent and the backpropagation algorithm. In the online phase, any small cell whose location is known by a macrocell, measures the SINR of their UEs and reports the average value to the macrocell, which corresponds to that cluster value. Then, all the cluster values are learned via the DNN. Our model hinders the frequent updates of the offline cluster values, since the pattern among clusters changes rarely, due to significant environmental changes. The further details of the model that shows the type and size of each layer as well as the activation functions are depicted in Table \ref{tab:AE}. Here, the rectified linear unit (ReLU) is employed in the hidden layers to capture the non-linear relations.

\begin{table} [!h] 
\renewcommand{\arraystretch}{1.3}
\caption{The DNN model to learn the cluster values}
\label{tab:AE}
\centering
\begin{tabular}{c|c|c|c}
    \hline
    Layer & Type & Size & Activation\\
    \hline
    \hline
    In & Inputs & $1$ & - \\
    \hline
    Hidden-1 & Fully Connected & $N/4$ & ReLU\\
    \hline
    Hidden-2 & Fully Connected & $N/2$ & ReLU\\
    \hline
   Out & Outputs & N & Linear \\	
    \hline
\end{tabular}
\end{table}

\subsection{Pseudo-code of the Proposed Algorithm}
The pseudo-code of the proposed algorithm is given in Algorithm \ref{algorithm3}. Accordingly the interference values are learned in the offline phase for each state-action pair. In the online phase, the parameters of the DNN model responsible for learning the UE locations are first trained. In what follows, the $\bold{w}$ is set to zero, and then iteratively updated based on observing state, taking actions and taking rewards as illustrated in Fig. \ref{fig:seqOrder}. At each training iteration actions are selected using the $\epsilon$-greedy policy with the proposed scheduling in (\ref{epsilon-scheduling}) and the features in (\ref{refVec2}). Once $\textbf{w}$ is trained, the optimum action can be found for all states irrespective of how large the state-action pairs are thanks to the function approximation. This apparently alleviates the-curse-of-dimensionality problem. In the operational phase, the quantized and averaged SINRs of the typical UEs are found. This gives the input state, and the optimum action is selected for this state. 
\begin{algorithm}
 \caption{The training algorithm for the proposed Q-learning algorithm}\label{algorithm3}
 \begin{algorithmic}[1]
\renewcommand{\algorithmicrequire}{\textbf{Input: }}
\renewcommand{\algorithmicensure}{\textbf{Output: }}
\REQUIRE $s \in \mathbb{S}$ 
\ENSURE  $a^* \in \mathbb{A}$
\\ \textit{Offline Phase}:  Multi-agent mean field RL
	\STATE Learn $\beta(s,a)$ as described in \eqref{MCinterference}, \eqref{targetMLaction} and \eqref{betadef}.
\\ \textit{Online Phase}: Single agent feature based Q-learning
	\STATE Train the DNN model in Table \ref{tab:AE}. 
	\STATE Set $\mu$, $\alpha$, $\gamma_{min}$, $\lambda_{u,v}$, $k$, $T_\epsilon$.
	\STATE Initialize the optimization parameters as $\theta_t=15^{\circ}$, $\theta_{3dB}=10^{\circ}$, $\phi_{3dB} =70^{\circ}$.
  \STATE Initialize $\textbf{w}$ to 0.
	\STATE Initialize the state according to the current quantized SINRs, i.e., $s=[\gamma_{1,1}^{(q)}[0] \cdots \gamma_{U,V}^{(q)}[0]]$. 
	\STATE Choose the action $a$ for $s$ using the step 1 and $\epsilon$-greedy policy in (\ref{epsilon-greedy}). 
  \FOR {$n =0:\text{training period}$}
  \STATE Take the selected action $a$. 
	\STATE Determine $x_i, \forall i$ in (\ref{refVec2}).
  \STATE Observe the next state $s'=[\gamma_{1,1}^{(q)}[n+1] \cdots \gamma_{U,V}^{(q)}[n+1]]$.
	\STATE Observe the immediate reward $r[n]$ in (\ref{rewardImm}).
	\STATE Update $\epsilon$ according to (\ref{epsilon-scheduling}).
  \STATE Choose the next action $a'$ using step 1 and $\epsilon$-greedy policy in (\ref{epsilon-greedy}). 
  \STATE Update $\textbf{w}$ according to (\ref{updatew2}).
	\STATE Normalize $\textbf{w}$ as $w_i \leftarrow w_i/\sum_iw_i$. 
  \STATE Set $s\leftarrow s'$.
  \STATE Set $a\leftarrow a'$.
  \ENDFOR
 \end{algorithmic} 
 \end{algorithm}
 
\subsection{Convergence and Complexity Analysis}
To prove the convergence of the proposed algorithm, we first show that both the offline multi-agent mean field RL algorithm and the online single agent feature based Q-learning with linear function approximation converge. For the former, \cite{ML-MARL} proves that treating all the neighboring agents of a target agent as a single virtual agent, and then solving this two agent RL problem with game theory converge to a Nash equilibrium. The convergence of the latter in the sense of Bellman optimality is well-known if there is a single agent \cite{Barto}. However, in the online phase there are multiple agents, and each of them picks an action according to its local observations without caring what other agents do. Hence, we need to show that the equilibrium attained in the offline phase with mean field RL is not disturbed despite the online individual behavior of macrocells.
\begin{proposition}
The equilibrium $\bar{a}_k$ in \eqref{MCinterference} attained for the $k^{th}$ macrocell in offline multi-agent mean field RL holds even if single agent RL is used in the online phase for the uniformly distributed small cells.
\end{proposition}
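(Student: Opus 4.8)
The plan is to prove that the action profile $\{a_k^*\}$ produced by the offline mean field RL is a fixed point of the online best-response dynamics, so that once the online single-agent learning has converged no macrocell wishes to deviate and the mean action $\bar a_k$ in \eqref{MCinterference} is left unchanged. First I would record the self-consistency property that \cite{ML-MARL} guarantees at convergence of the offline phase: for every macrocell $k$ the limiting action satisfies $a_k^* = {\rm argmax\ }\pi_k(a_k\,|\,s,\bar a_k^*)$ with $\bar a_k^* = \frac{1}{N_j}\sum_j a_j^*$, i.e. $a_k^*$ is a best response to the average action of $k$'s macrocell neighbors, and this average is precisely what the learned function $\beta(s,a)$ in \eqref{betadef} reproduces. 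Since the online linear-function-approximation Q-learning converges \cite{Barto}, it then suffices to identify its limit with $\{a_k^*\}$.

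Second, I would invoke the interference decomposition \eqref{SCinterference}: the interference at a UE of macrocell $k$ splits into a macrocell term $\sum_{j\neq k,\,j\in M_c}P_jA_jG_j$ and a small-cell term $\sum_{j\in S_c}P_jA_jG_j$. The online action is chosen as ${\rm argmax}_a\,\textbf{x}^T\textbf{w}$, and by \eqref{refVec2}--\eqref{DeltaAntenna} the feature vector $\textbf{x}$ depends on the candidate action $a$ only through $\Delta A_{u,v}$ (macrocell $k$'s own antenna gain toward its UEs) and $\Delta\beta(s,a)$ (the learned macrocell interference); the small-cell term enters only the fixed offset $\gamma_{u,v}^{(q)}[0]$ and, over time, as a fluctuation in the observed reward \eqref{rewardImm} that the update \eqref{updatew2} averages into $\textbf{w}$ rather than into the comparison between candidate actions. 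Hence the online best response of macrocell $k$ to a neighbor profile $\{a_j\}_{j\neq k}$ depends on that profile only through the mean $\bar a_k=\frac{1}{N_j}\sum_j a_j$ carried by $\beta$, and therefore coincides with the offline best-response operation in \eqref{targetMLaction}. Evaluating it at $\{a_j^*\}$ returns $a_k^*$ for every $k$, so $\{a_k^*\}$ is a fixed point of the online dynamics and $\bar a_k$ is preserved.

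Third, this is where the hypothesis of uniformly distributed small cells is essential. Uniformity makes the aggregate small-cell interference statistically homogeneous across macrocells -- in expectation over small-cell placements it behaves like a common additive increment to the noise floor $\sigma_n^2$, i.e. a background mean field of its own -- and, crucially, it is independent of every macrocell's antenna setting, so it neither creates asymmetric incentives among macrocells nor couples their online optimizations to one another. With many uniform small cells the aggregate also concentrates, which keeps the state $s=[\gamma_{1,1}^{(q)}[n]\cdots\gamma_{U,V}^{(q)}[n]]$ -- and hence $\beta(s,a)$ -- stable across periods, so the argmax in the previous paragraph is well posed.

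The step I expect to be the main obstacle is making the ``absorbed into $\textbf{w}$, not into the argmax'' claim rigorous. The reward \eqref{rewardGen} is a concave log function of an SINR that contains the small-cell term, so one cannot merely assert that an action-independent additive term leaves the maximizer unchanged; the genuine reason is the linear parametrization, namely that $a$ affects $\textbf{x}$ only through $\Delta A$ and $\Delta\beta$ while the small-cell contribution is action-independent and, by uniformity, stationary. Turning this into a bound -- controlling the perturbation of $\textbf{w}$, and therefore of ${\rm argmax}_a\,\textbf{x}^T\textbf{w}$, by a quantity governed by the relative variance $\eta$ of \eqref{env_dyn} -- is the delicate part, and it is also what ties this proposition to the robustness claim reported later for large $\eta$.
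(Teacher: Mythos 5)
Your proposal targets something strictly stronger than the proposition: that every macrocell's online best response coincides exactly with its offline action $a_k^*$, so the whole profile $\{a_k^*\}$ is a fixed point of the online dynamics and the mean is trivially preserved. The load-bearing step is the claim that the small-cell interference, being action-independent, perturbs only $\textbf{w}$ and not ${\rm argmax\ }_a\, \textbf{x}^T\textbf{w}$ --- and you correctly flag this as the main obstacle. It is a genuine gap, not a technicality: an action-independent additive term $c$ in the denominator of \eqref{SINR_exp} does change the ranking of actions (for two candidate actions with received powers $S_1, S_2$ and macrocell interference $I_1, I_2$, which of $S_1/(I_1+c)$ and $S_2/(I_2+c)$ is larger depends on $c$), and the log in \eqref{rewardGen} plus the sum over users compounds this. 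As written, the fixed-point argument does not close, and the uniformity hypothesis is used only as a homogeneity/concentration heuristic rather than doing the essential work.

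The paper's proof takes a different route with a weaker intermediate target, which is precisely how it avoids your obstacle. It does not claim that individual actions are preserved. Instead it supposes that the small-cell interference excludes, for each neighboring macrocell $j$, a subset $\mathbb{B}_j \subset \mathbb{A}$ of actions (those yielding lower reward), so that $j$'s online action deviates from its offline action $a_j$ by some $\delta_j$, with $\delta_j = 0$ if $a_j \notin \mathbb{B}_j$. Uniformity of the small cells is then invoked to argue that $\bigcup_j \mathbb{B}_j = \mathbb{A}$ and hence that the deviations cancel in aggregate, giving $\bar{a}_k = \frac{1}{N_j}\sum_j (a_j + \delta_j) = \frac{1}{N_j}\sum_j a_j$: only the \emph{mean} action in \eqref{MCinterference}, which is all the mean-field equilibrium depends on, is preserved. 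That is where uniformity does the work in the paper, in contrast to your use of it. To be fair, the paper's own argument is itself informal (the inference from ``the excluded sets cover $\mathbb{A}$'' to ``the deviations sum to zero'' is asserted rather than derived), but the structural point you should adopt is to aim at cancellation of deviations in the mean rather than invariance of each individual argmax; the latter is false in general for the reward in \eqref{rewardGen}.
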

\begin{proof}
Suppose that the interference due to the small cells in the coverage region of a macrocell reduces the number of possible actions, i.e., each agent picks an action from a subset of the action space that are subject to less interference as
\begin{equation}
\mathbb{X}_j = \mathbb{A} \setminus \mathbb{B}_j
\end{equation}
where $\mathbb{B}_j$ is the set that leads to lower reward due to the interference. The distance between the online action picked in $\mathbb{X}_j$  and the offline optimum action $a_j$ is denoted by $\delta_j$ such that $ \delta_j = 0$ if $a_j \notin \mathbb{B}_j$. Since the small cells are distributed uniformly, 
\begin{equation}
\bigcup_{j=1}^{N_j}  \mathbb{B}_j = \mathbb{A}. 
\end{equation}
This refers to $\sum_{j=1}^{N_j} \delta_j \in \emptyset$, or
\begin{equation}
\bar{a}_k = \frac{1}{N_j}\sum_{j=1}^{N_j}a_j+\delta_j.
\end{equation}
Hence, the equilibrium $\bar{a}_k$ is preserved, which completes the proof.
\end{proof}

The complexity of the proposed algorithm is evaluated separately as the computational and sample complexity (or number of trials). For the computational complexity, finding the Nash equilibrium, which has exponential worst-case behavior for two-player games \cite{NashQ}, dominates the overall complexity. However, this is done in the offline phase. Thus, its complexity can be tolerated. For sample complexity, the offline step requires trials on the order of $|\mathbb{S}|^2|\mathbb{A}|^2$ samples, where the square term in the action space stems from having one target and one virtual agent. This is again in the offline phase, and hence this sample complexity is not a big issue. For the online step, there is not any known bound for the sample complexity and it depends on how good the features are for the relevant environment. Our empirical results show that hundreds of online trials are sufficient to have satisfactory performance for our definition of features, given in \eqref{refVec2}. 

\section{Simulations and Training} \label{Simulations}
The proposed deep RL algorithm based on Q-learning is evaluated with extensive simulations. In particular, its performance is compared with the optimum solution. This optimum solution is acquired by solving (\ref{optimizationProblemForm}) with a genie-aided method, which knows all the antenna gains of UEs for each antenna setting and the responses of neighboring cells, or solving \eqref{optimizationProblemRew} with the online multi-agent mean field RL algorithm. To have a fair comparison with the proposed algorithm, we allow to choose one antenna configuration for a large period of $N$ time intervals instead of taking the best action for each time interval for the optimum solution. Furthermore, the proposed deep RL algorithm is explicitly compared with the classical single agent RL, in which the target cell is totally nonadaptive to the other cells and treat them as a part of the environment. We have developed an LTE simulator using Python libraries and generated a multi-cell HetNet simulation environment. For this simulation environment the performance of the algorithm is assessed  through SciKit and TensorFlow libraries. In this section, the simulation environment is first clarified, then the states and actions for this specific simulation environment are explained, and the simulation results are provided at last.  

\subsection{Simulation Environment}
A two-tier HetNet is considered such that there are macrocells and picocells. All the nodes including the macrocells, picocells and UEs are randomly distributed over a square planar area, which is taken as $5\times5$ km$^2$.  Specifically, macrocells and picocells are distributed according to the Poisson Point Process with a density of $\lambda_m=0.25/$km$^2$ and $\lambda_p=2$/km$^2$, respectively. There are $400$ UEs that are distributed uniformly within the area of interest. The overall simulation environment is illustrated in Fig. \ref{fig:SimEnv}.
\begin{figure} [!h] 
\centering
\includegraphics [width=4.5in]{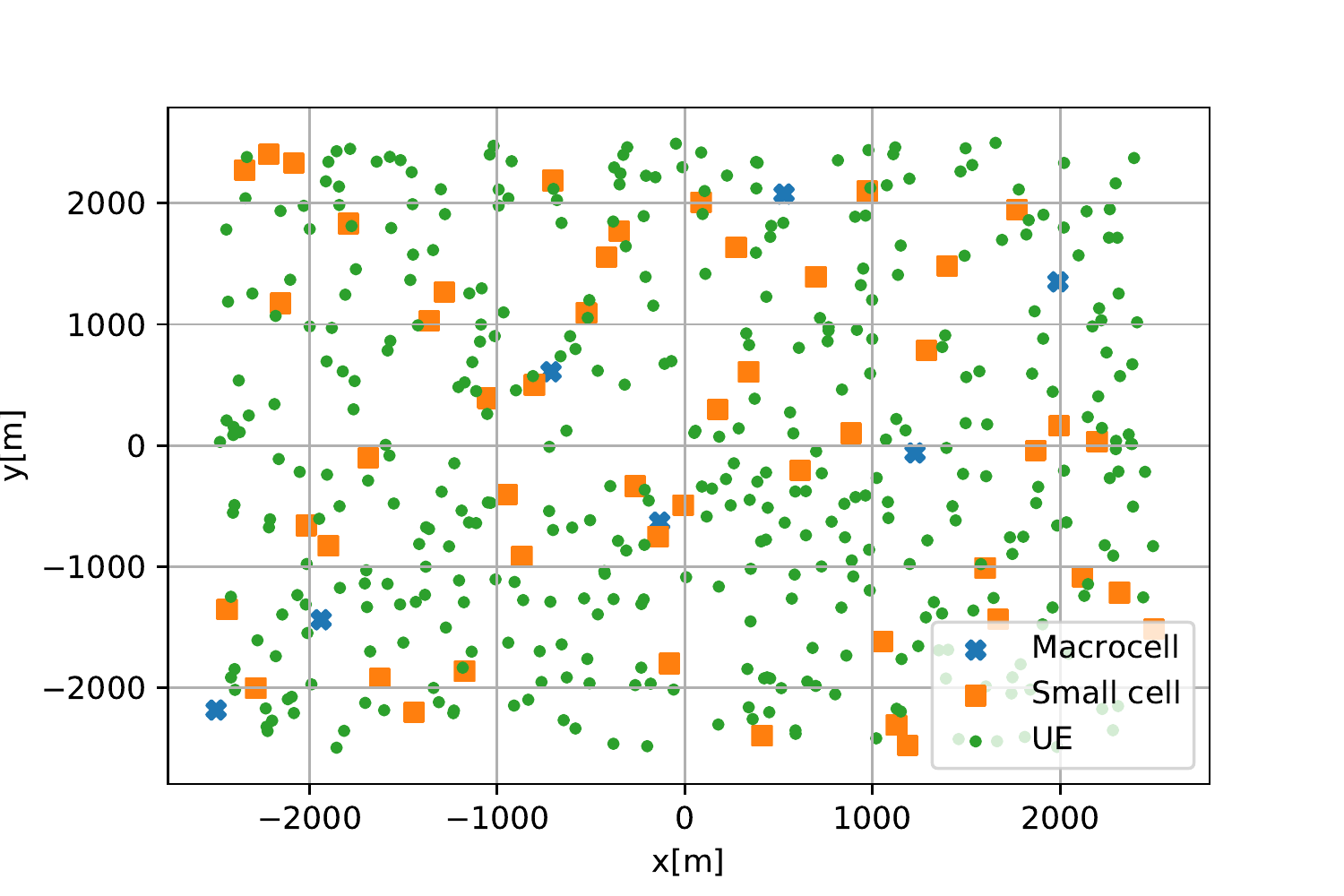}
\caption{A two-tier HetNet environment with macrocells and picocells, in which the uniformly distributed UEs are connected to the strongest cell.}\label{fig:SimEnv}
\end{figure}

UEs are assumed to be equipped with a single antenna, and connected to one of the cells according to the maximum received signal strength, which is mainly determined by  the transmission power, antenna gain, path loss and shadow fading. All these parameters for macrocells and picocells are set according to \cite{Macro-Pico}. Specifically, the transmission powers of macrocells are $46$ dBm with a maximum antenna gain of $15$ dBi. On the other hand, the picocells transmit at a maximum power of $24$ dBm using a omni-directional antenna gain with $0$ dBi. The path loss model for macrocells is assumed to be $128.1+37.6\log(d_m)$ where $d_m$ is in km scale, whereas it is $38 + 30\log(d_p)$ for picocells and $d_p$ is in meters. There is log-normal shadow fading in the environment, whose standard deviation is $10$ dB for macrocells and $6$ dB for picocells. 

Without loss of generality, the macrocell of the bottom left base station in Fig. \ref{fig:SimEnv} that covers the area between $0\degree$ and $120\degree$ is picked up for the performance measurements of the proposed deep RL algorithm. There are $5$ UEs connected to this macrocell, which are spread almost the entire coverage region. That is, the UEs are not clustered in a specific small sub-area under the coverage region. This is a quite complicated scenario to dynamically tune the antenna parameters according to the UEs and the time-varying environment including the effects of other cells. Hence, it can be interesting to observe the performance of the proposed deep RL algorithm for this scenario.

\subsection{Training}
To train the proposed deep RL algorithm, the states and actions of this macrocell have to be clarified. Specifically, the states are defined by uniformly quantizing the SINRs of UEs with $2$ dB resolution, in which the minimum SINR is $0$ dB and the maximum SINR is $12$ dB. This results in $7^5=16,807$ states for a selected macrocell that has (for example) $5$ UEs, wherein $s_0=[0, 0, 0, 0, 0]$, $s_1=[0, 0, 0, 0, 2]$, and $s_{16,806}=[12, 12, 12, 12, 12]$. The action space, which is composed of the possible antenna configurations, is defined in terms of $\theta_t$, $\theta_{3dB}$, $\phi_{3dB}$ in Table \ref{tab:actions}. Accordingly, $|\theta_{t}|=6$, $|\theta_{3dB}|=5$, $|\phi_{3dB}|=6$ leading to  $180$ different actions. The actions are ordered as $a_0={(0\degree,  4.4\degree, 45\degree)}, a_1=(0\degree,  4.4\degree, 55\degree), \cdots, a_6=(0\degree,  6.8\degree, 45\degree), \cdots, a_{179}=(15\degree,  13.5\degree, 85\degree)$. The algorithm is trained with these definitions of states and actions according to the aforementioned simulation environment for a certain period of time.
\begin{table*}[!h] 
\renewcommand{\arraystretch}{1.3}
\caption{The possible antenna configurations in terms of $\theta_t$, $\theta_{3dB}$, $\phi_{3dB}$ that constitute the actions}
\label{tab:actions}
\centering
\begin{tabular}{c|c|c}
    \hline
    Parameter & Notation &  Possible Values\\
    \hline
    \hline
     Tilt angle & $\theta_t$   &   $\theta_t=\{0\degree, 3\degree, 6\degree, 9\degree, 12\degree, 15\degree\}$\\ 
    \hline
    Vertical $3$dB beamwidth & $\theta_{3dB}$   &   $\theta_{3dB}=\{4.4\degree, 6.8\degree, 9.4\degree, 10\degree, 13.5\degree\}$\\
    \hline
    Horizontal $3$dB beamwidth & $\phi_{3dB}$   &  $\phi_{3dB}=\{45\degree, 55\degree , 65\degree, 70\degree, 75\degree, 85\degree\}$\\
    \hline
\end{tabular}
\end{table*}

After training, the parameters of the algorithm are set, which enables the agent to take an optimum action. The optimum action for a given state lasts for a long period of time until there is a sustained environmental change that can greatly degrade the performance. If this happens, the algorithm has to be retrained to set the parameters according to the new environment. Note that this retraining does not have to be done from scratch. To illustrate, transfer learning can be employed to obtain the new parameters with minimum number of training samples \cite{Pan}. 

\subsection{Performance}
The proposed algorithm performance is assessed for the aforementioned simulation environment with sparse network knowledge just by maximizing the long-term expected rewards according to the designed state and action spaces with moderate number of online training samples. Specifically, $200$ online training samples are used to set the parameters of $\textbf{w}$ after the offline training. This is quite reasonable considering the fact that there has to be at least $\kappa*16,807*180$ online samples for conventional Q-learning (if there is not any offline training), where $\kappa$ is some polynomial bound \cite{Kearns2002}. Without loss of generality, the hyper-parameters are selected as $\mu=0.8, \alpha = 0.9, \gamma_{min} = 2, \lambda_{u,v}=1, \forall_{u,v}$. Furthermore, $T_{\epsilon}$ is taken $10$ for the designed scheduling in (\ref{epsilon-scheduling}), and $\textbf{w}$ is initialized to all zeros before online training.

The performance of the proposed algorithm is compared with the optimum solution in an attempt to see what percentage of the optimum solution, which is quantified in terms of a SINR gain, can be obtained. For this purpose, we normalize the performance so that $1$ refers to perfectly having the optimum performance. The plot that shows the performance in terms of relative variance of the environment dynamics, which is defined in \eqref{env_dyn}, is depicted in Fig. \ref{fig:PerfTrade}. There are $2$ important takeaways that can be inferred from this plot. First, for low relative variance the proposed algorithm provides near-optimum performance. This makes total sense. The dynamics of the environment in the offline phase becomes exactly the same as for the online environment when the variance is $0$, which also makes the relative variance $0$, and hence we can achieve the optimum performance. When the variance increases, the mismatch between the offline and online environments increases and the performance decreases due to using a small number of training samples. Second, there is apparently some performance guarantee for the proposed algorithm, because the performance is saturated around $0.6$ of the optimum even at very high relative variance. That is, even if there is a performance degradation for increasing variance, the performance does not drop below $60\%$ of the optimum solution. This can be simply quantified as having $0.6\Delta$ dB SINR gain with hundreds of trials instead of getting $\Delta$ dB SINR gain with millions of trials.
\begin{figure} [!t] 
\centering
\includegraphics [width=4.75in]{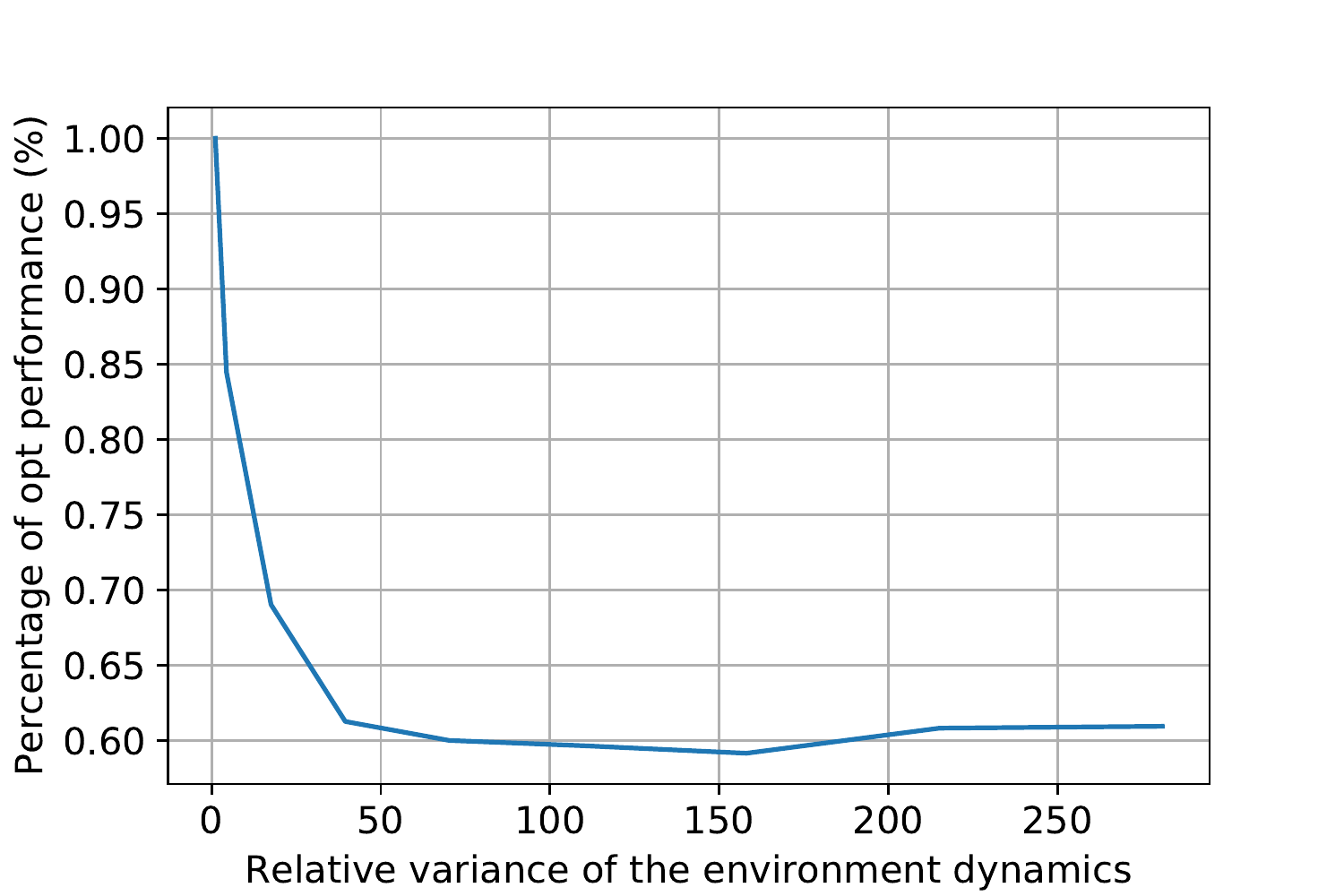}
\caption{The performance of the proposed algorithm with respect to the optimum solution depending on the relative variance of the environment dynamics.}\label{fig:PerfTrade}
\end{figure}

To observe the efficiency of the proposed deep RL algorithm, which is a compromise between multi-agent and single agent RL, its performance is compared with the following two cases in terms of SINR gain when: (i) an online multi-agent mean-field RL algorithm is used, which also refers to the optimum solution, (ii) a classical single agent mean-field RL algorithm is used without any offline phase. This result is presented in Fig. \ref{fig:SINR_gain}, which clearly indicates that our algorithm offers a good trade-off between a complex multi-agent mean field RL that requires millions of online trials and a global network knowledge, and a highly suboptimum single-agent RL.
\begin{figure} [!t] 
\centering
\includegraphics [width=4.75in]{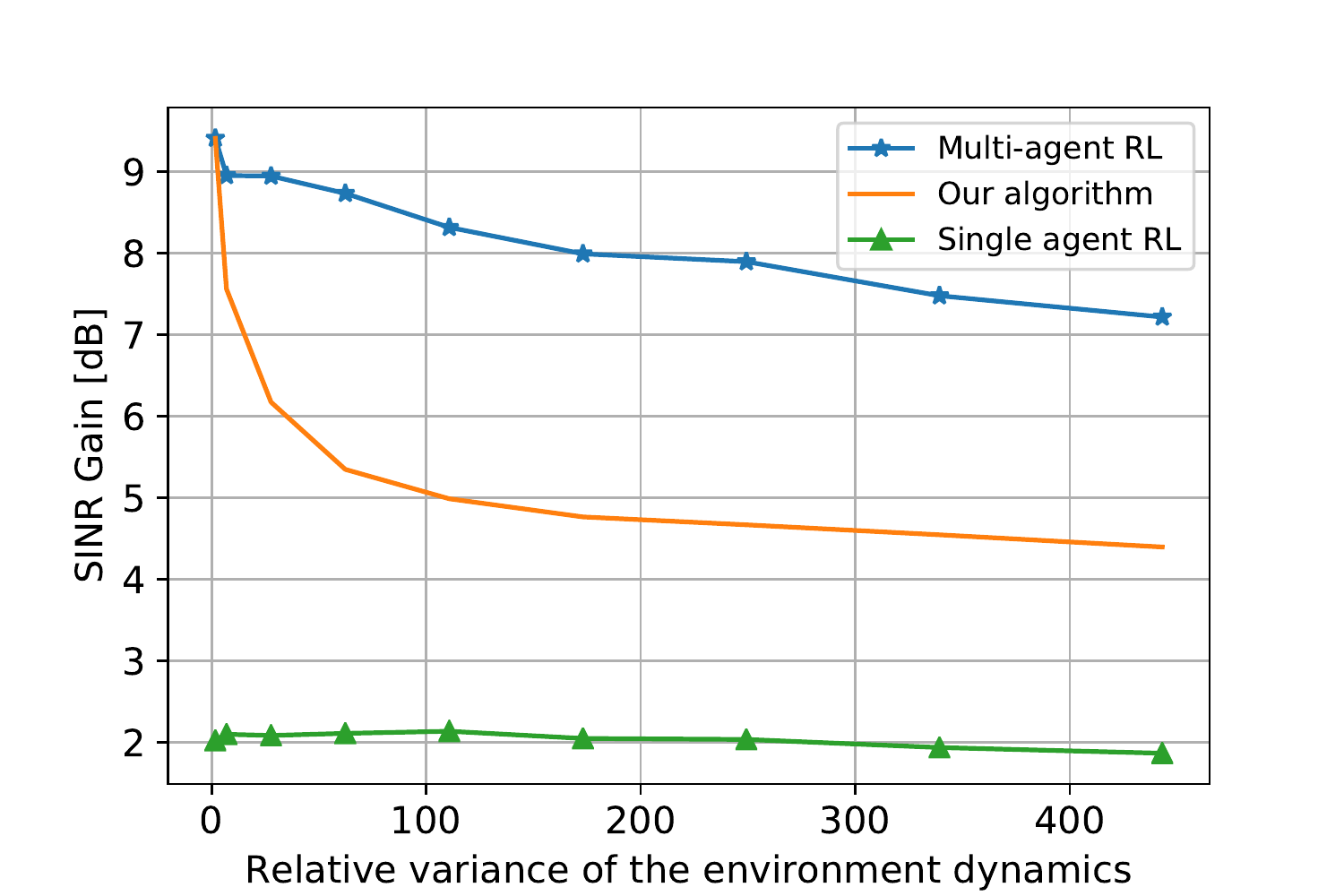}
\caption{Comparison of the proposed algorithm with online multi-agent mean-field RL and single agent RL in terms of the SINR gain.}\label{fig:SINR_gain}
\end{figure}

A key metric besides the performance in RL-based algorithms for general wireless networks is the compactness of the algorithm. That is, how different states map to actions is important, especially for the case of antenna tuning, in which changing the antenna parameters after training is too costly. Here, what is desired is to map all the states into a small subset of actions to reduce the need for the antenna configuration change when the state of the system alters. The compactness of our algorithm is illustrated in Fig. \ref{fig:CompactPol}. As can be seen, nearly $80\%$ of the states are mapped to actions whose indices are between $70$ and $90$. This compactness is mainly associated with using a function approximation and low mobility. To be more precise, the former interpolates the close states to the same action, and the latter leads to have the almost same geometry for all states, and macrocells tend to select the action according to the geometry. That is, if there is not any interference and users are static, all the states are mapped to a single action. For slowly varying environments that have low mobility, mapping the same action to different states intelligently (i.e., by learning a function approximation) does not lead to significant performance loss. On the other hand, for the environments that have high variance with highly mobile UEs, a tradeoff between the tuning sensitivity and the performance optimality can surface. Notice that our algorithm needs the UE locations while using the features. Next, we evaluate how efficiently the macrocells can learn the UE locations.
\begin{figure} [!h] 
\centering
\includegraphics [width=4.75in]{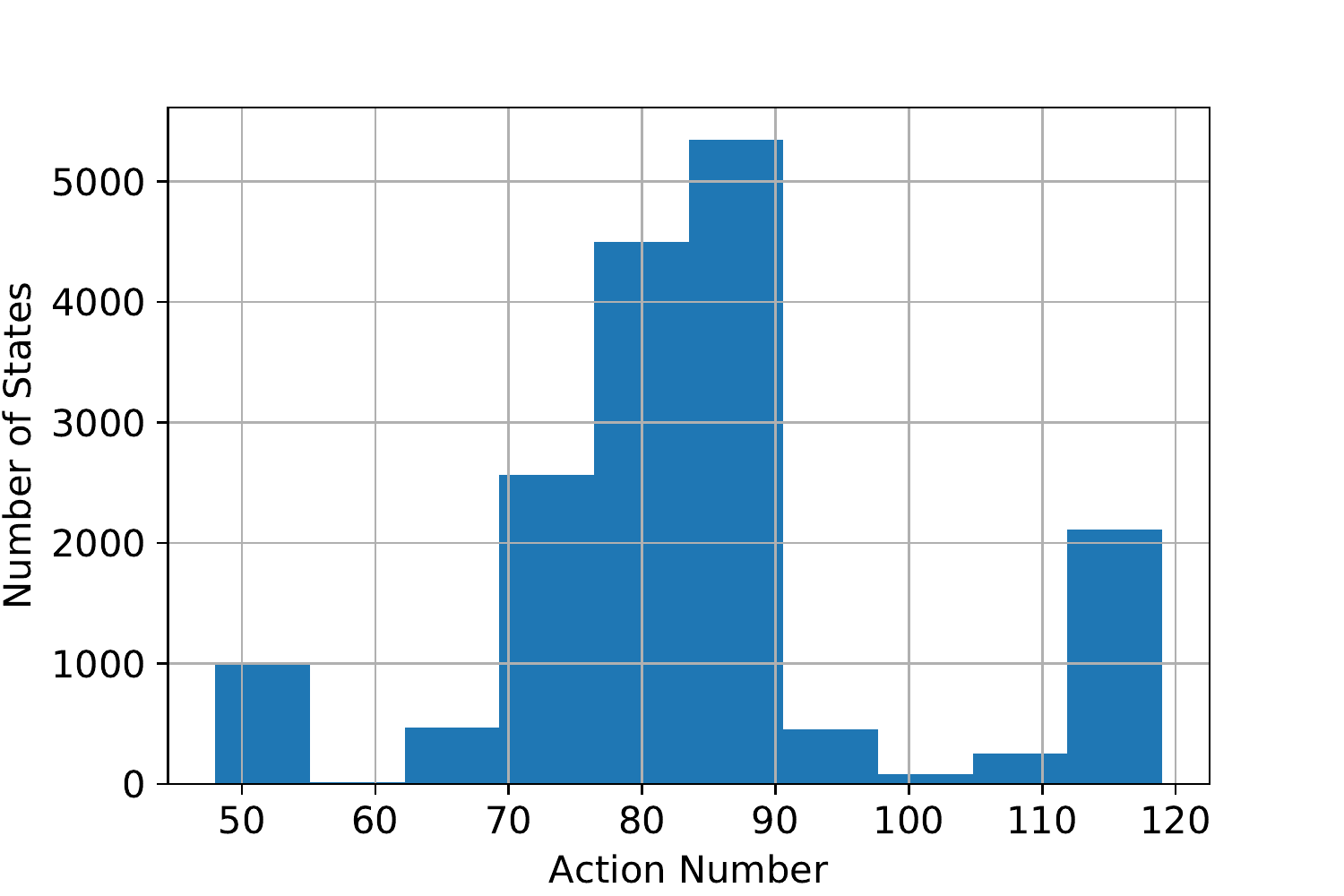}
\caption{Mapping the $16,807$ states to $180$ actions according to the proposed deep RL algorithm based on Q-learning.}\label{fig:CompactPol}
\end{figure}

\subsection{Learning the UE Locations}
The locations of UEs are learned by placing them to the correct clusters as presented in Fig. \ref{fig:PolarAE}. In this direction, the coverage region of the macrocell is divided into $20$ virtual clusters with $0.1$ km resolution in radius and $30\degree$ resolution in angle. This cluster definition only leads to $0.68$ dB loss in average antenna gain according to (\ref{antenna_gain}) when UEs are located at the edge of the clusters and we consider them at the center. This seems reasonable considering that rewards are received in $2$ dB resolution. 

To find the locations of UEs, we need to learn the online values of the clusters via the proposed supervised DNN model in Table \ref{tab:AE}. In this experiment, a synthetic dataset is generated to train and test the proposed DNN model using the simulation environment in Fig. \ref{fig:SimEnv}, in which the data used for training and test is separated. Once DNN is trained, UEs are assigned to one of these clusters according to their averaged SINRs. The main reason for using the average SINR instead of the instantaneous SINR lies in the fact that average SINR can give a closer value with respect to the corresponding cluster value. Notice that this is only possible for low mobility, which is the case in our optimization problem. Furthermore, the impact of small-scale fading disappears due to this averaging, and hence small-scale fading does not affect the learning accuracy.  

The accuracy of the proposed model is found by comparing the average SINRs of UEs with the ground truth cluster values in the test data according to the minimum distance criterion. Specifically, if the average SINRs of UEs is the closest with the correct cluster in the ground truth, this means success; otherwise an error comes. To better understand the benefit of the proposed DNN model, its performance is compared with the conventional fingerprinting approach, in which the offline cluster values are averaged and then UEs are assigned to one of the clusters according to the minimum distance criterion directly, i.e., without exploiting a pattern. This comparison is provided in Fig. \ref{fig:Accuracy} indicating that the proposed model can give a very accurate result as long as a sufficient number of training samples are employed. 
\begin{figure} [!h] 
\centering
\includegraphics [width=4.75in]{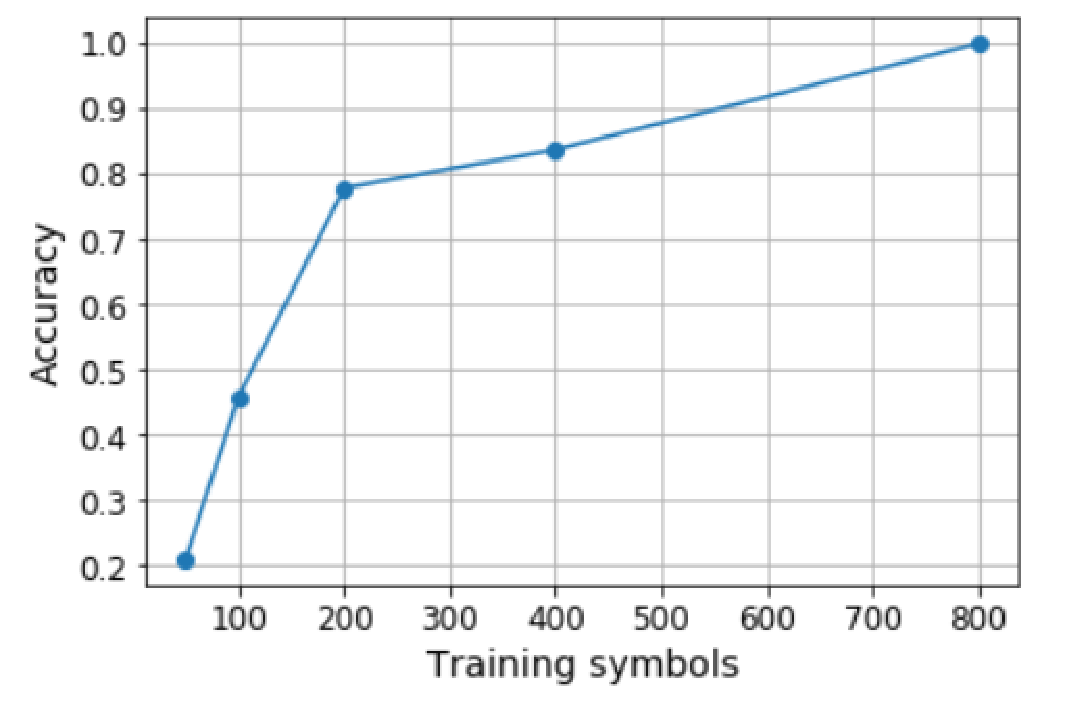}
\caption{The accuracy of finding the locations of UEs by learning the pattern among cluster values.}\label{fig:Accuracy}
\end{figure}

\section{Conclusions}\label{Conclusions}
In this paper, we aim to dynamically optimize the antenna parameters of macrocells in HetNets in order to maximize the weighted sum-rate of the users in the macrocell. This optimization problem is first turned into an RL problem via defining states, actions and rewards according to the available control signals in macrocells. Then, a practical algorithm is developed to tackle the large sample complexity and curse-of-dimensionality problems. Specifically, we use DL and a mean field multi-agent RL approach to design a novel and practical deep RL algorithm that needs only hundreds of samples in an online environment. Despite such a small number of online samples, the proposed algorithm can give performance approaching the optimum distributed solution for low relative variance of the environmental dynamics. Even in the case of high relative variance, the proposed algorithm appears to ensure some performance gain regardless of how high the variance goes. As future work, determining the performance gap between the distributed optimum solution and the centralized global optimum solution and shrinking this gap by some messaging among cells would be valuable. Additionally, it could be an interesting future work to utilize the proposed RL framework to dynamically learn the coefficients of phase shifters associated with the number of antennas. The proposed RL framework and algorithm can also be applied for other parameter optimizations such as transmit power.

\appendix[Basics of Q-learning] \label{Q-learning}
The basic idea in Q-learning is to update the quality or q-values of state-action pairs iteratively according to the rewards by following a policy $\pi_n$. More precisely,
\begin{equation} \label{qFunc}
q_{\pi_n}(s,a)\leftarrow q_{\pi_n}(s,a) + \mu(r[n]+\alpha q_{\pi_n}(s',a') - q_{\pi_n}(s,a))
\end{equation}
where $\mu$ is the learning rate, $\alpha$ is the discount factor for future rewards, $s'$ and $a'$ are the next state and action, and $s$ and $a$ are the current state and action. The optimum policy $\pi^*$  for (\ref{qFunc}) can be found after $T$ iterations as
\begin{equation} 
\lim_{n\rightarrow T}{\pi_n}=\pi^*
\end{equation}
through the $\epsilon$-greedy algorithm where $\epsilon$ provides a balance between exploration and exploitation \cite{Barto}. In the $\epsilon$-greedy algorithm, the maximum q-valued action is chosen for a given state with probability $1-\epsilon$, or a random action is taken with probability $\epsilon/|\mathbb{A}|$ at each step so that
\begin{equation} \label{epsilon-greedy}  
  \pi_n(s,a)=
  \begin{cases}
    \epsilon/|\mathbb{A}| + 1 - \epsilon, & \text{if $a^* = \arg \max_a\ q_{\pi_n}(s,a)$} \\
     \epsilon/|\mathbb{A}|, & \text{o.w.} \\
  \end{cases}
\end{equation}
where $\pi_n(s,a)$ shows the probability of selecting the action $a$ for the state $s$ under the policy $\pi_n$.

The key challenge to find the optimum policy for (\ref{optimizationProblemRew}) with Q-learning is the large number of state-action pairs. This not only increases the convergence time but also wastes resources due to trial and error mechanism based on feedback. To cope with the large number of state-action pairs, approximating the q-values with a function seems appropriate. The main advantage of this method is that there is no need to see all state-action pairs, which provides faster convergence. The q-values can be approximated in terms of a vector $\textbf{w}$ such that 
\begin{equation} \label{paramQ}
q_{\pi_n}(s,a) \approx \hat{q}(s,a,\textbf{w})
\end{equation}
and once $\textbf{w}$ is trained, q-values can be directly determined for every state and action. Additionally, any function approximator for Q-learning generalizes the learning from seen state-action pairs to unseen pairs, which is based on the intuition that ``nearby'' states should behave similarly, i.e., select the same action.

\end{document}